\documentclass{article}



\usepackage[preprint]{neurips_2025}



\usepackage[utf8]{inputenc} 
\usepackage[T1]{fontenc}    
\usepackage{hyperref}       
\usepackage{url}            
\usepackage{booktabs}       
\usepackage{amsfonts}       
\usepackage{nicefrac}       
\usepackage{microtype}      
\usepackage{xcolor}         
\usepackage{graphicx}
\usepackage{amsmath}
\usepackage{arydshln}

\usepackage{color-edits}
\usepackage{wrapfig}

\usepackage{amsmath}
\usepackage{amssymb}
\usepackage{mathtools}
\usepackage{amsthm}

\usepackage[capitalize,noabbrev]{cleveref}
\theoremstyle{plain}
\newtheorem{theorem}{Theorem}[section]

\theoremstyle{definition}

\theoremstyle{remark}

\addauthor{lu}{red}

\title{RePrompt: Reasoning-Augmented Reprompting for Text-to-Image Generation via Reinforcement Learning}

%

\author{%
  Mingrui Wu$^1$\thanks{Work done at microsoft. $\dag$Project Leader.},  Lu Wang$^2$$^{\dag}$, Pu Zhao$^2$$^{\dag}$,  Fangkai Yang$^2$$^{\dag}$, Jianjin Zhang$^2$, Jianfeng Liu$^2$, \\
  \textbf{Yuefeng Zhan$^2$, Weihao Han$^2$, Hao Sun$^2$, Jiayi Ji$^1$, Xiaoshuai Sun$^1$, Qingwei Lin$^2$,} \\ 
  \textbf{Weiwei Deng$^2$, Dongmei Zhang$^2$, Feng Sun$^2$, Qi Zhang$^2$, Rongrong Ji$^1$} \\
  $1$ Key Laboratory of Multimedia Trusted Perception and Efficient Computing, \\ Ministry of Education of China, Xiamen University, 361005, P.R. China\\
  $2$ Microsoft
}

\begin{document}

\maketitle

\begin{abstract}
  Despite recent progress in text-to-image (T2I) generation, existing models often struggle to faithfully capture user intentions from short and under-specified prompts. While prior work has attempted to enhance prompts using large language models (LLMs), these methods frequently generate stylistic or unrealistic content due to insufficient grounding in visual semantics and real-world composition. Inspired by recent advances in reasoning for language model, we propose RePrompt, a novel reprompting framework that introduces explicit reasoning into the prompt enhancement process via reinforcement learning. Instead of relying on handcrafted rules or stylistic rewrites, our method trains a language model to generate structured, self-reflective prompts by optimizing for image-level outcomes. The tailored reward models assesse the generated images in terms of human preference, semantic alignment, and visual composition, providing indirect supervision to refine prompt generation. Our approach enables end-to-end training without human-annotated data. Experiments on GenEval and T2I-Compbench show that RePrompt significantly boosts spatial layout fidelity and compositional generalization across diverse T2I backbones, establishing new state-of-the-art results. 
  Code is available at: 
\url{https://github.com/microsoft/DKI_LLM/tree/main/RePrompt}.
\end{abstract}

\section{Introduction}
Text-to-image (T2I) generation has made rapid progress with the rise of large-scale generative models~\cite{flux2024,esser2024scaling,podell2023sdxl,chen2024pixart}, yet a persistent challenge remains: users typically provide concise and under-specified prompts, which often result in images that fail to reflect the intended semantics or visually coherent compositions. Generated outputs may misrepresent object counts, overlook spatial relations, or violate real-world plausibility. This misalignment arises from the gap between natural language descriptions and the structured visual logic required for faithful image generation~\cite{yang2024mastering}.

Previous work on prompt enhancement in T2I can be divided into two main approaches. The first approach focuses on iterative refinement: an image is generated from an initial prompt, and subsequent feedback, derived either from human preference models or from automated scoring systems, is used to improve the prompt or intermediate representations over multiple rounds~\cite{yang2024idea2img,wu2024self,wang2024genartist,guo2025can}. Although this approach can progressively improve image quality, it suffers from high latency and computational overhead due to repeated image generation, and it rarely incorporates explicit scene semantics or compositional reasoning. The second approach enriches prompts in a single pass by leveraging large language models (LLMs) to inject additional detail and context~\cite{betker2023improving,hao2023optimizing}. While these methods produce linguistically fluent and expressive descriptions, they frequently generate prompts that produce images with semantically inconsistent or visually implausible content, such as conflicting object placements or unrealistic interactions, because the underlying LLMs lack grounding in physical reality and do not incorporate feedback from downstream visual tasks. As a result, they frequently hallucinate content or miss critical spatial and attribute-level relationships (see Figure~\ref{fig:intro}).

\begin{figure}[t]
  \centering
\includegraphics[width=0.97\linewidth]{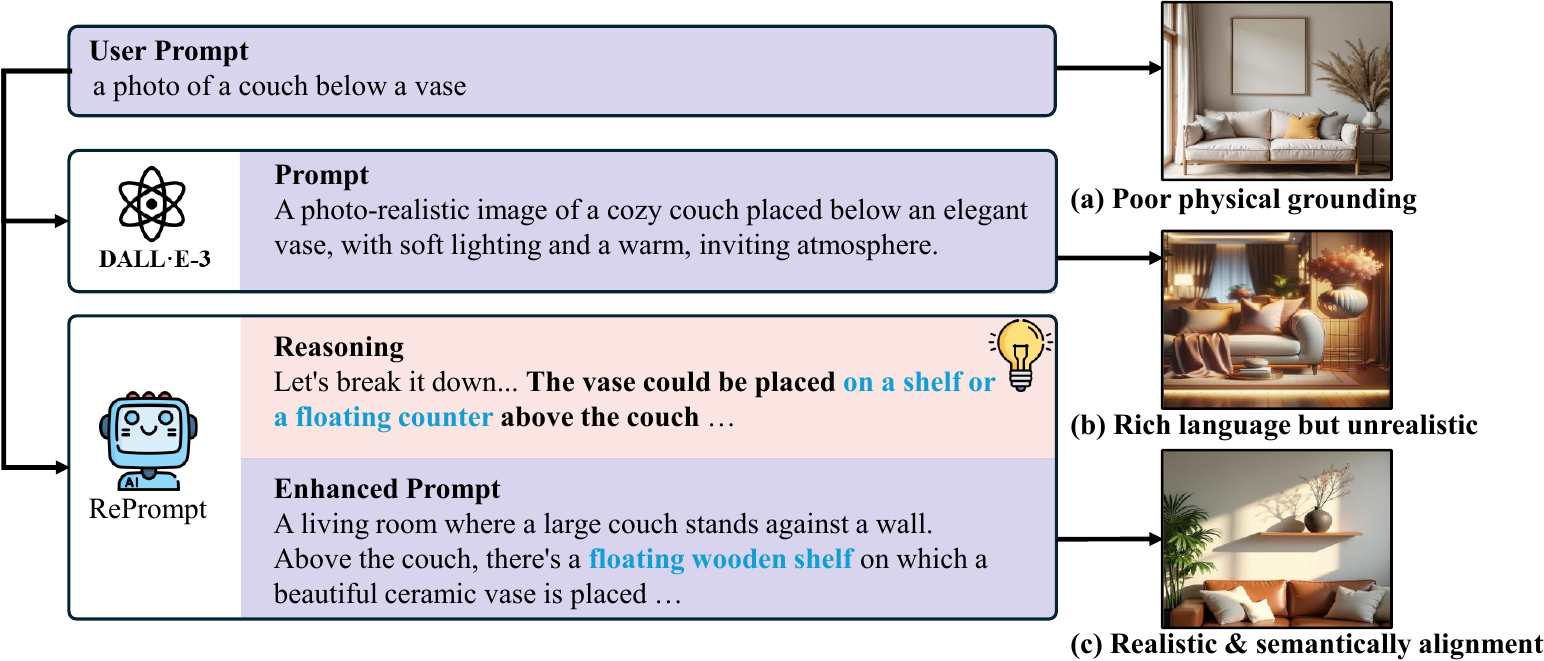}
  \caption{
    Given the user prompt "\textit{a photo of a couch below a vase}", existing models like DELL-E3 generate rich language descriptions but often produce unrealistic or physically implausible compositions. In contrast, our RePrompt performs explicit chain-of-thought reasoning to resolve spatial relations, resulting in enhanced prompts that guide text-to-image models towards realistic and semantically aligned generations. 
    }
  \label{fig:intro}
  \vspace{-0.5cm}
\end{figure}
In contrast, we propose RePrompt, a reasoning-augmented prompt refinement framework trained via reinforcement learning. Rather than relying on stylistic rewriting or black-box completions, RePrompt trains a language model to generate structured and semantically grounded prompts through self-reflection and step-by-step decomposition. Motivated by recent advances in reasoning-augmented language models~\cite{guo2025deepseek,trung2024reft,team2025kimi,jaech2024openai}, RePrompt enables the model to internally simulate the visual implications of a prompt—much like how humans mentally visualize a scene before drawing. This structured, logic-driven process anticipates potential errors (e.g., conflicting object positions, missing entities, or spatial incoherence) during prompt construction, thereby reducing the need for multiple rounds of image generation. 

A core component of RePrompt is a \textbf{T2I RePrompt Reward Model} tailored for text-to-image generation. Instead of relying on pre-labeled reasoning traces or handcrafted prompt templates, RePrompt learns from downstream visual feedback by optimizing prompt generation through reinforcement learning. To capture the multifaceted nature of image quality, we design a ensemble reward that evaluates generated images along three dimentions: human preference, visual realism, and semantic alignment with the input.
By learning from diverse and grounded feedback signals, the model develops a more robust reasoning strategy that transfers across prompt types, scene structures, and T2I backbones, enabling stronger performance on unseen inputs without overfitting to specific linguistic patterns.

Experiments on GenEval~\cite{ghosh2023geneval} and T2I-Compbench~\cite{huang2023t2i} demonstrate that RePrompt significantly improves compositional accuracy, semantic fidelity, and spatial coherence. Notably, on the GenEval benchmark, RePrompt surpasses Qwen2.5 3B-enhanced baselines by +77.1\% (FLUX~\cite{flux2024}), +78.8\% (SD3~\cite{esser2024scaling}) and +122.2\% (Pixart-$\Sigma$~\cite{chen2024pixart}) in the position category, highlighting its superior capability in grounding spatial relations. Furthermore, RePrompt achieves the best overall accuracy (0.76) among all evaluated methods while maintaining an order-of-magnitude lower latency than optimization-heavy baselines like Idea2Img (30s vs. 140s per image), offering a scalable and inference-efficient solution. These findings validate the effectiveness of explicit reasoning in prompt construction for closing the semantic-visual alignment gap in text-to-image generation, without relying on larger language models or expensive optimization at inference time.

\begin{figure}[t]
  \centering
\includegraphics[width=0.99\linewidth]{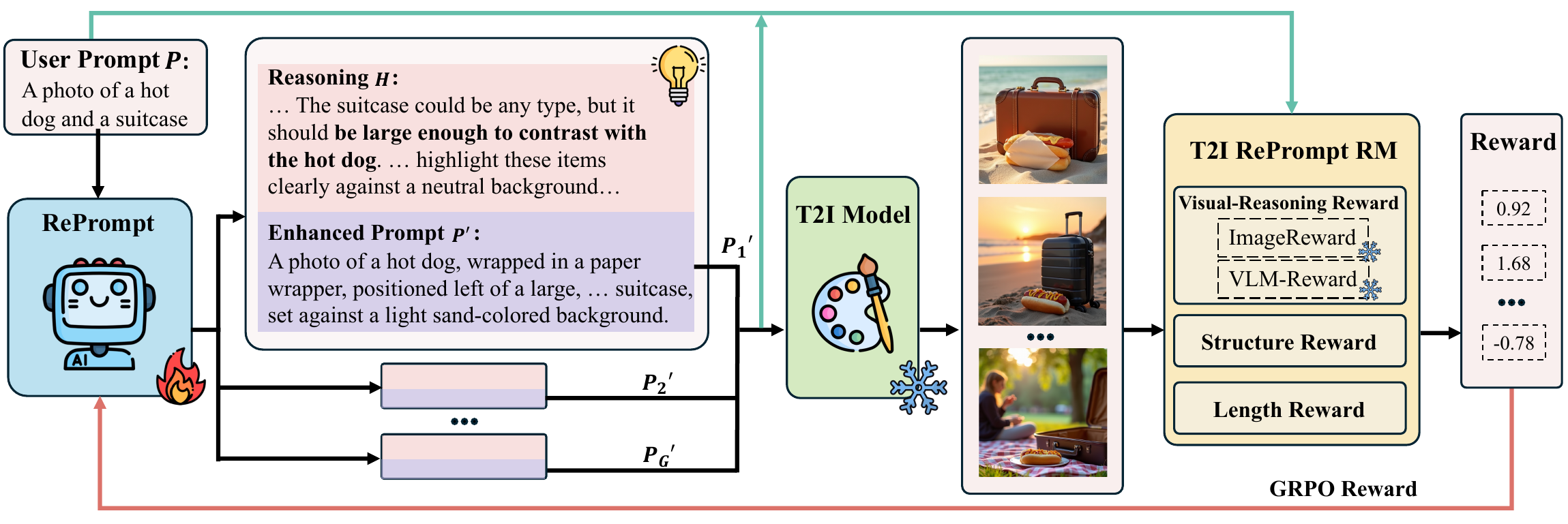}
  \caption{Overview of the proposed RePrompt. For each input prompt, RePrompt generates multiple reasoning trace and enhanced prompt pairs. The reasoning trace guides the model to produce more detailed, image-grounded prompts. These are used to synthesize candidate images via a T2I model, which are then scored by a reward model. Feedback is used to update RePrompt via GRPO.}
  \label{fig:method}
\end{figure}

\section{Related Work}
\paragraph{Text to Image Generation.}
Recently, large-scale diffusion models~\cite{flux2024,esser2024scaling,podell2023sdxl,chen2024pixart,xie2025sana,liu2024playground,betker2023improving,ma2025inference,rombach2022high,saharia2022photorealistic} have achieved impressive progress in generating high-resolution, photorealistic images from complex textual prompts. To enhance alignment between text and visuals, prior work has explored prompt engineering~\cite{hao2023optimizing,mo2024dynamic,yeh2024tipo,manas2024improving,yun2025learning,cao2023beautifulprompt,qin2024diffusiongpt,yang2024idea2img,wu2024self,wang2024genartist}, often relying on manual or heuristic strategies with limited generalization. We propose a reinforcement learning-based framework that automatically refines prompts through iterative reasoning, achieving better semantic alignment than static or rule-based methods.

\paragraph{Reinforcement Learning.}
Reinforcement learning (RL) has proven effective in scenarios where iterative feedback is essential for task optimization. In the realm of generative models, RL-based approaches~\cite{guo2025deepseek,wallace2024diffusion,yang2024dense,guo2024versat2i,gupta2025simple,lee2024parrot,zhao2024scores,nabati2024personalized,black2023training,liang2024rich,kirstain2023pick,lee2023aligning,shen2025vlm,zhang2024itercomp,wang2025simplear,jiang2025t2i,xue2025dancegrpo} have been employed to fine-tune outputs by maximizing a reward function that encapsulates desired attributes such as image realism, diversity, and semantic fidelity. In our framework, we adopt RL techniques to drive the automatic optimization of text prompts. By defining a multi-faceted reward that not only evaluates the visual quality of the generated image but also the interpretability and relevance of the prompt, our approach enables the model to learn an optimal prompt refinement strategy over successive iterations. Notably, T2I-R1~\cite{jiang2025t2i} is closely related to our work, but it targets Janus-Pro~\cite{chen2025janus}, a unified vision-language model. In contrast, our method trains an auxiliary LLM that generalizes across diverse text-to-image models, offering a more flexible and model-agnostic solution.

\paragraph{Reasoning in LLM.} 
Reasoning in LLMs~\cite{wei2022chain} improves complex task solving by decomposing problems into intermediate steps~\cite{feng2025video,huang2025vision,yang2025r1,zhang2025r1,huang2025boosting,yu2025perception,ma2025deepperception,li2025videochat,lu2025ui}. In multimodal generation, reasoning mechanisms~\cite{yang2024mastering,guo2025can,wang2025mint,zhang2025layercraft,sahili2024faircot,chen2024training} enhance prompt understanding and semantic alignment. Our method integrates reasoning with reinforcement learning, enabling step-wise prompt refinement that improves interpretability, text-image alignment, and image quality—offering a new perspective for prompt optimization in T2I generation.

\section{Method}
\label{sec:method}

We present \textbf{RePrompt}, a reasoning‐augmented reprompting framework for text‐to‐image (T2I) generation. RePrompt decouples prompt generation from image generation, i.e., training a language model to produce structured, semantically rich prompts, while keeping the T2I backbone fixed.  We optimize RePrompt via reinforcement learning (RL) to directly improve downstream image quality, compositional correctness, and usability.

\subsection{Framework Overview}
\label{sec:overview}

RePrompt comprises three main modules (Figure~\ref{fig:method}):  
1) a \textbf{Prompting Policy} \(\pi_\theta\), which produces a \emph{reasoning trace} \(H\) and an enhanced prompt \(P'\);  
2) a fixed \textbf{T2I Synthesizer} \(f_\phi\), which renders an image \(I\) from \(P'\);  
3) a \textbf{T2I RePrompt Reward Model} \(R_{\mathrm{total}}(I,P,P')\), which scores the image on realism, semantic alignment, and prompt structure.

Given an input prompt \(P\), the policy samples:
$y = (H, P') \sim \pi_\theta(P)$,
the synthesizer then generates:
$I = f_\phi(P')$. Since \(f_\phi\) is non‐differentiable, we formulate prompt generation as a single‐step Markov Decision Process (MDP):
\textbf{State}: the original prompt \(P\).  \textbf{Action}: sampling \(y=(H,P')\sim\pi_\theta(y\mid P)\).  \textbf{Transition}: deterministic mapping \(P'\mapsto I = f_\phi(P')\).  \textbf{Reward} \(r\): the reasoning reward \(r=R_{\mathrm{total}}(I,P,P')\).  \textbf{Objective}: maximize \(E_{P\sim\mathcal{D}}\big[E_{y\sim\pi_\theta(y\mid P)}[r]\big]\).

We train \(\pi_\theta\) via reinforcement learning (Group Relative Policy Optimization, GRPO~\cite{guo2025deepseek}) to improve downstream image quality.  By keeping the T2I model \(f_\phi\) fixed, RePrompt learns backbone‐specific reasoning strategies that enhance semantic fidelity and visual realism without requiring any manually annotated reasoning traces.

\subsection{T2I RePrompt Reward Model}
\label{sec:reward}

\begin{wrapfigure}{r}{0.45\textwidth}
    \centering
    \includegraphics[width=0.99\linewidth]{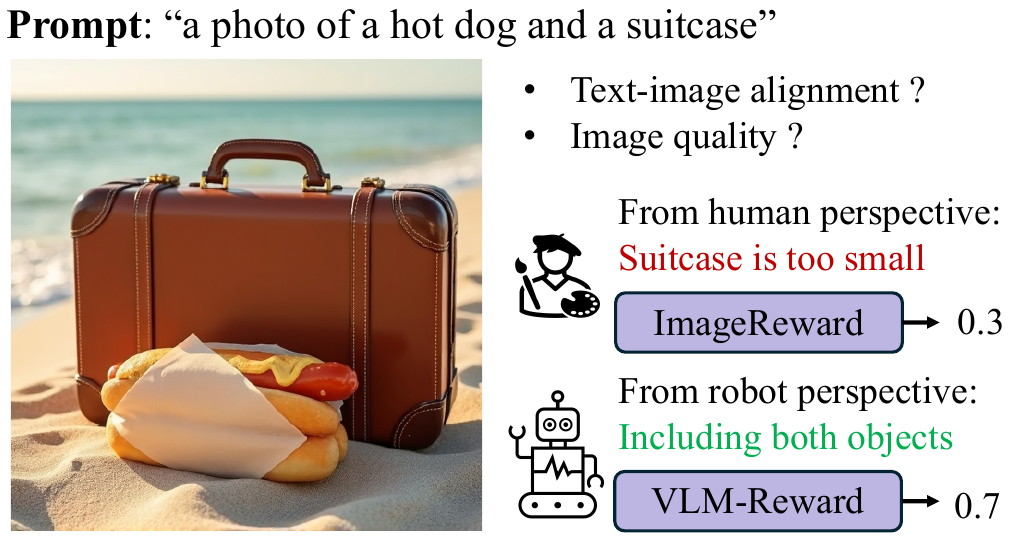}
    \caption{The Visual-Reasoning Reward.}
    \label{fig:reward}
    \vspace{-0.3cm}
\end{wrapfigure}

A central component of our framework is the \textbf{T2I RePrompt Reward Model}—an ensemble, image-grounded reward function specifically designed for the prompt refinement task in T2I generation. In contrast to generic reward functions used in natural language or vision tasks, our reward model is co-developed with the objective of enhancing reasoning-driven prompt construction. It explicitly evaluates whether a generated prompt yields an image that is \emph{realistic}, \emph{semantically faithful to the user intent}, and \emph{compositionally coherent}.

This reward framework is critical to training RePrompt effectively and serves three key goals:  
(1) \textbf{Stable optimization:} Each component provides dense and structured feedback, mitigating the challenges of sparse or noisy reward signals during early-stage learning.  
(2) \textbf{Multi-faceted supervision:} The reward captures complementary aspects of T2I quality, including human preference, visual quality, and semantic alignment, ensuring holistic prompt improvement.  
(3) \textbf{Cross-model generalization:} Because the reward depends only on the prompt-image output pair and not on any specific T2I architecture, it generalizes naturally across different generation backbones and unseen prompt distributions. Together, these properties enable our reward model to not only guide the training of the reprompting policy but also ensure broad applicability and stable learning across varying settings in text-to-image generation.

\paragraph{Visual‐Reasoning Reward (\(R_{\mathrm{vis}}\)).}
This component (as shown in Figure~\ref{fig:reward}) captures both user-aligned preferences and semantic correctness at the image level:
\begin{equation}
    R_{\mathrm{vis}} = \alpha \,R_{\mathrm{pref}}^{\text{IMG}} + \gamma \,R_{\mathrm{sem}}^{\text{VLM}}.
\end{equation}
Here, \(R_{\mathrm{pref}}^{\text{IMG}}\) is derived from ImageReward~\cite{xu2023imagereward}, which used to evaluate whether generated images align with human preferences. \(R_{\mathrm{sem}}^{\text{VLM}}\) is obtained from VLM-Reward~\cite{achiam2023gpt}, which evaluates semantic consistency and visual quality using a vision-language model. The weights \(\alpha\) and \(\gamma\) allow us to control the trade-off between perceived image quality and factual alignment.

\paragraph{Structure Reward (\(R_{\mathrm{struc}}\)).}
To ensure that the generated output maintains a clear reasoning-to-prompt format, we enforce a structured syntax:
\begin{quote}
\texttt{<reason>…</reason><prompt>…</prompt>}
\end{quote}
We apply a binary reward:
\begin{equation}
    R_{\mathrm{struc}} = 
\begin{cases}
+1, & \text{if format is correct} \\
-1, & \text{otherwise}
\end{cases}
\end{equation}

This reward encourages models to adhere to a standardized output layout, simplifying downstream parsing and ensuring the interpretability of reasoning traces.

\paragraph{Length Reward (\(R_{\mathrm{len}}\)).}
To ensure compatibility with real-world T2I models such as SDXL and FLUX.1, which impose token-length limits, we apply a constraint on prompt length:
\begin{equation}
    R_{\mathrm{len}} =
\begin{cases}
+1, & L_{\min} \leq L \leq L_{\max}, \\
-1, & \text{otherwise}
\end{cases}
\end{equation}
where \(L\) is the token length of \(P'\), and \([L_{\min}, L_{\max}]\) is empirically set to 15 and 77 tokens. This ensures the prompt is both concise and informative.

\paragraph{Ensemble Reward and Optimization.} 
All reward components are normalized to unit variance and summed:
\begin{equation}
    R_{\mathrm{total}} = R_{\mathrm{vis}} + R_{\mathrm{struc}} + R_{\mathrm{len}}.
\end{equation}
The RePrompt policy \(\pi_\theta\) is trained to maximize expected downstream reward:
\begin{equation}
    \theta^{\ast} = \arg\max_\theta \;
    \mathbb{E}_{P \sim \mathcal{D},\, y \sim \pi_\theta(y \mid P)}\left[ R_{\mathrm{total}}(I, P, P') \right],
\end{equation}
where \(I = f_\phi(P')\) and \(f_\phi\) is a fixed text-to-image generator.

\paragraph{Generalization and Flexibility.}
Since the reward depends only on the input-output behavior of the system, not the internals of the image model, our framework generalizes well across unseen prompts and new T2I backbones. This modularity also enables RePrompt to adapt to model-specific strengths and failure patterns, while maintaining stable and interpretable training signals throughout.

\subsection{Reprompt Optimization}
\label{sec:optimization}
Prompt generation for T2I involves a non‐differentiable, black‐box image renderer \(f_\phi\).  RL allows us to optimize \(\pi_\theta\) directly with respect to \emph{downstream image outcomes} rather than proxy losses on text.  Moreover, because RePrompt is \emph{individualized to each T2I backbone}, it can adapt its reasoning and prompt style to the specific strengths and limitations of a given model—improving generalization and image fidelity without retraining \(f_\phi\).

At each update step, for a given user prompt \(P\), we sample a set of \(G\) candidate outputs \(\{y_i\}_{i=1}^G\sim\pi_{\theta_{\mathrm{old}}}(y\mid P)\).  Each candidate \(y_i=(H_i,P'_i)\) yields an image \(I_i=f_\phi(P'_i)\) and receives a scalar reward \(r_i = R_{\mathrm{total}}(I_i,P,P')\).  We then compute normalized advantages:
\begin{equation}
    A_i = \frac{r_i - \mu_r}{\sigma_r}, 
\quad \mu_r = \frac{1}{G}\sum_{j=1}^G r_j,\quad
\sigma_r = \sqrt{\frac{1}{G}\sum_{j=1}^G (r_j - \mu_r)^2}.
\end{equation}

We use Group Relative Policy Optimization (GRPO)~\cite{guo2025deepseek} as a practical and stable update method for training the reprompting policy based on group-wise reward comparisons. The objective is defined as:
\begin{align}
\mathcal{J}_{\mathrm{GRPO}}(\theta) 
= E_{P,\{y_i\}\sim\pi_{\theta_{\mathrm{old}}}} \Bigg[\,
\frac{1}{G}\sum_{i=1}^G 
&\min\!\Big(\rho_i\,A_i,\;\mathrm{clip}(\rho_i,1-\varepsilon,1+\varepsilon)\,A_i\Big) \nonumber\\
&- \beta_{\mathrm{KL}}\;KL\big(\pi_\theta(y\mid P)\,\|\,\pi_{\mathrm{ref}}(y\mid P)\big)\Bigg],
\label{eq:grpo}
\end{align}
where 
\(\rho_i = \frac{\pi_\theta(y_i\mid P)}{\pi_{\theta_{\mathrm{old}}}(y_i\mid P)}\),  
\(\varepsilon\) and \(\beta_{\mathrm{KL}}\) are clipping and penalty coefficients,  
\(\pi_{\mathrm{ref}}\) is a reference policy (e.g., the initial or distillation policy). By fixing \(f_\phi\) and optimizing only \(\pi_\theta\), RePrompt can be applied \emph{universally} to any off-the-shelf T2I model, learning backbone, specific reasoning and prompt strategies without retraining the image generator.  
We further validate RePrompt with a variance-reduction analysis, showing that structured reasoning reduces reward uncertainty and lowers the sample complexity for accurate estimation. This leads to faster and more stable GRPO training. Full analysis and proof are provided in Appendix~\ref{app:a}.

\section{Experiments}


\subsection{Settings}
\label{sec:detail}
\paragraph{Implementation Details.}
We use Qwen2.5-3B~\cite{yang2024qwen2} as our base language model. For the text-to-image model used for training, we use the FLUX.1-dev~\cite{flux2024} model, which generates images at a resolution of 512×512 pixels. 
Our model is trained using the TRL~\footnote{https://github.com/huggingface/trl} reinforcement learning framework for 3 epochs, with 4 outputs generated per instance, the weight of ImageReward~\cite{xu2023imagereward} and VLM-Reward are both 0.5. The VLM used for computing VLM-Reward is GPT-4V. All experiments were conducted on 8 NVIDIA A100 (80GB) GPUs, and the entire training process required about 6 hours. More details are in the Appendix~\ref{app:b}.

\paragraph{Training Data.}
Inspired by the prompt construction strategy in GenEval~\cite{ghosh2023geneval}, we adapt six object-centric templates to a newly curated list of 288 common daily objects generated via GPT-4~\cite{achiam2023gpt}. This results in a training corpus of 9,000 prompts, carefully filtered to avoid overlap with the GenEval. We use 8,000 prompts to fine-tune our RePrompt via supervised learning, and 1,000 prompts for reinforcement learning.

\paragraph{Evaluation Setup.}
To assess the impact of RePrompt, we evaluate on two benchmarks: GenEval~\cite{ghosh2023geneval} and T2I-Compbench~\cite{huang2023t2i}. GenEval focuses on instance-level alignment with user intent using concise prompts, while T2I-Compbench measures compositional generation under complex scenarios involving multiple objects, attributes, and spatial relations.

\begin{table}[t]
  \caption{Evaluation of text-to-image generation on the GenEval benchmark. Our method consistently outperforms strong baselines, achieving the best overall scores. Notably, our approach shows substantial gains in spatial position understanding over Qwen2.5 3B-enhanced baselines, demonstrating its superior capability in grounding spatial relations. }
  \label{tab:geneval}
  \centering
  \small
  \setlength{\tabcolsep}{4pt} 
  \begin{tabular}{l|cccccc|c}
    \toprule
    Method  & Single & Two & Counting & Colors & Position & Attribute & Overall~$\uparrow$ \\
           &            object & object &          &        &          & binding   &  \\
    \midrule
    FLUX~\cite{flux2024}          & 0.99 & 0.79 & 0.75 & 0.78 & 0.18 & 0.45  & 0.66 \\
    +GPT4 & 0.99 & 0.79 & 0.68 & 0.84 & 0.51 & 0.52 & 0.72 \\
    +Deepseek-r1 & 1.00 & 0.81 & 0.56 & 0.78 & 0.47 & 0.43 & 0.67\\
    +Qwen2.5 7B & 0.99 & 0.83 & 0.62 & 0.84 & 0.36 & 0.51 & 0.69 \\
    +Qwen2.5 3B      & 0.99 & 0.84 & 0.63 & 0.81 & 0.35 & 0.48 & 0.68 \\
    +Ours (train w/ FLUX) & 0.98 & \textbf{0.87} & \textbf{0.77} & \textbf{0.85} & \textbf{0.62} & \textbf{0.49} & \textbf{0.76} \\
    \hdashline
    \textit{Improvement over Qwen2.5 3B} & -1.0\% & +3.6\% & +22.2\% & +4.9\% & \textbf{+77.1\%} & +2.1\% & +11.8\% \\
    
    \midrule
    SD3~\cite{esser2024scaling}           & 1.00 & 0.85 & 0.62 & 0.88 & 0.22 & 0.58 & 0.69 \\
    +GPT4 & 1.00 & 0.84 & 0.51 & 0.85 & 0.48 & 0.54 & 0.70 \\
    +Deepseek-r1 & 0.99 & 0.82 & 0.53 & 0.80 & 0.44 & 0.46 & 0.67 \\
    +Qwen2.5 7B & 1.00 & 0.82 & 0.49 & 0.85 & 0.34 & 0.58 & 0.68 \\
    +Qwen2.5 3B     & 1.00 & 0.86 & 0.53 & 0.84 & 0.33 & 0.55 & 0.68 \\
    +Ours (train w/ FLUX)    & 0.99 & \textbf{0.86} & 0.60 & 0.86 & \textbf{0.59} & \textbf{0.60} & \textbf{0.75} \\
    \hdashline
    \textit{Improvement over Qwen2.5 3B} & -1.0\% & 0.0\% & +13.2\% & +2.4\% & \textbf{+78.8\%} & +9.1\% & +10.3\% \\
    \midrule
    Pixart-$\Sigma$~\cite{chen2024pixart} & 0.99 & 0.60 & 0.47 & 0.81 & 0.10 & 0.26 & 0.54 \\
    +GPT4 & 0.96 & 0.67 & 0.48 & 0.84 & 0.36 & 0.31 & 0.60 \\
    +Deepseek-r1 & 0.99 & 0.63 & 0.43 & 0.78 & 0.24 & 0.27 & 0.56 \\
    +Qwen2.5 7B & 0.96 & 0.67 & 0.43 & 0.83 & 0.20 & 0.32 & 0.57 \\
    +Qwen2.5 3B & 0.99 & 0.68 & 0.48 & 0.82 & 0.18 & 0.32 & 0.58 \\
    +Ours (train w/ FLUX) & 0.98 & 0.64 & \textbf{0.56} & \textbf{0.81} & \textbf{0.40} & \textbf{0.35} & \textbf{0.62} \\
    \hdashline
    \textit{Improvement over Qwen2.5 3B} & -1.0\% & -5.9\% & +16.7\% & -1.2\% & \textbf{+122.2\%} & +9.4\% & +6.9\% \\
    \bottomrule
  \end{tabular}
\end{table}

\begin{table}[t]
  \caption{Evaluation of text-to-image generation on the T2I-Compbench. We report the baseline results, their variants enhanced with Qwen2.5 3B, and our method trained with FLUX. Our approach consistently improves performance across most aspects, particularly in Spatial compositions. }
  \label{tab:comp}
  \centering
  \small
  \begin{tabular}{lcccccc}
    \toprule
    Method     & Color & Shape & Texture & Spatial & Numeracy & Complex\\
    \midrule
    FLUX &  0.7223 & 0.4796 & 0.6522 & 0.2494 & 0.6101 & 0.3616\\
    +Qwen2.5 3B     & 0.7149 & 0.5103 & 0.6012 & 0.2579 & 0.5982 & 0.3325\\
    +Ours~(train w/ FLUX)   &  \textbf{0.7501} & \textbf{0.5276} & 0.6515 & \textbf{0.3301} & \textbf{0.6499} & \textbf{0.3721} \\
    \midrule
    SD3 & 0.7941 & 0.5812 & 0.7224 & 0.2815 &  0.5871 & 0.3714 \\
    +Qwen2.5 3B & 0.7227 & 0.5478 & 0.6581 & 0.2549  & 0.5934 & 0.3307\\
    +Ours~(train w/ FLUX) & 0.7866 & \textbf{0.5891} & 0.7184 & \textbf{0.3315} & \textbf{0.6289} & \textbf{0.3744} \\
    \midrule
    Pixart-$\Sigma$ & 0.5682 & 0.4717 & 0.5622 & 0.2497 & 0.5366 & 0.3655   \\
    +Qwen2.5 3B & 0.6618 & 0.4814 & 0.5662 & 0.2481 & 0.5443 &  0.3335 \\
    +Ours~(train w/ FLUX) & \textbf{0.6665} & \textbf{0.5011} & \textbf{0.6190} & \textbf{0.2913} & \textbf{0.5716} & \textbf{0.3680}   \\ 
    \bottomrule
  \end{tabular}
  \small
\end{table}

\subsection{Comparision}
Table~\ref{tab:geneval} presents the performance comparison across various text-to-image generation models evaluated on the GenEval benchmark, which assesses six fine-grained composition capabilities: single-object, two-object, counting, color, spatial position, and attribute binding.

Notably, our method demonstrates exceptional gains in spatial layout understanding (Position). For instance, when built upon FLUX~\cite{flux2024} , our approach achieves a 0.62 score on Position, representing a +77.1\% relative improvement over the Qwen2.5 3B baseline. Similarly, for SD3~\cite{esser2024scaling}, we observe a +78.8\% gain, and for Pixart-$\Sigma$~\cite{chen2024pixart}, the relative improvement reaches an impressive +122.2\%. This strong enhancement highlights the strength of our compositional training strategy in explicitly grounding spatial relations between objects. Beyond the Position metric, our method also achieves substantial improvements in Counting (+22.2\% for FLUX, +13.2\% for SD3, +16.7\% for Pixart-$\Sigma$) and moderate gains in attribute binding (+2.1\% to +9.4\%). As a result, we observe consistent boosts in overall GenEval scores: +11.8\% (FLUX), +10.3\% (SD3), and +6.9\% (Pixart-$\Sigma$). These results verify the effectiveness of our reinforcement learning–based reprompting strategy, which enables the language model to iteratively reason about visual composition and generate more precise, image-aligned prompts, and brings state-of-the-art advances in spatial understanding.


\begin{table}[t]
  \caption{Ablation study of SFT and RL on the GenEval benchmark.}
  \label{tab:abl-sft}
  \centering
  \small
  \setlength{\tabcolsep}{4pt} 
  \begin{tabular}{l|cccccc|c}
    \toprule
    Method  & Single & Two & Counting & Colors & Position & Attribute & Overall~$\uparrow$ \\
           &            object & object &          &        &          & binding   &  \\
    \midrule
    FLUX~\cite{flux2024}          & 0.99 & 0.79 & 0.75 & 0.78 & 0.18 & 0.45  & 0.66 \\
    +Qwen2.5 3B      & 0.99 & 0.84 & 0.63 & 0.81 & 0.35 & 0.48 & 0.68 \\
    \midrule
    w/ SFT & 0.99 & 0.83 & 0.64 & 0.81 & 0.43 & 0.44 & 0.69 \\
    w/ RL & 0.98 & 0.83 & 0.71 & \textbf{0.87} & 0.41 & \textbf{0.53} & 0.72 \\
    w/ SFT + RL & 0.98 & \textbf{0.87} & \textbf{0.77} & 0.85 & \textbf{0.62} & 0.49 & \textbf{0.76} \\
    
    \bottomrule
  \end{tabular}
\end{table}

\begin{table}[t]
\centering
\caption{Quantitative comparison between RePrompt and other method on image generation accuracy and latency with the subset of Geneval. All latency is measured on a single NVIDIA A100 GPU.}
\label{tab:comparison}
\small
\begin{tabular}{lcc}
\toprule
\textbf{Method} & \textbf{Accuracy} ↑ & \textbf{Latency (s / img)} ↓ \\
\midrule
FLUX & 0.65 & 20\\
Show-o~\cite{xie2024show} & 0.55 & 3\\
\midrule
Idea2Img~\cite{yang2024idea2img} (w/ FLUX) & 0.69 & 140 \\
PARM++~\cite{guo2025can} (w/ Show-o)       & 0.72                     & 110                   \\
\textbf{RePrompt} (w/ FLUX) & \textbf{0.76}         & \textbf{30}          \\
\bottomrule
\end{tabular}
\small
\end{table}

\subsection{Generalization Performance on T2I-Compbench}
We further assess the robustness of our method on the T2I-Compbench benchmark, which tests compositional generalization across six dimensions: color, shape, texture, spatial reasoning, numeracy, and complex attribute combinations. As shown in Table~\ref{tab:comp}, RePrompt consistently improves performance across all evaluated backbones. In particular, it significantly boosts spatial compositional scores (e.g., from 0.2494 to 0.3301 on FLUX and from 0.2815 to 0.3315 on SD3) and enhances numeracy understanding, two long-standing challenges in text-to-image generation.
Moreover, our method outperforms stronger LLM-enhanced baselines, for instance, on Pixart-$\Sigma$, RePrompt achieves notable gains in both texture and spatial dimensions. These results demonstrate that our approach generalizes well across diverse models and compositional skills, validating its effectiveness as a versatile and plug-and-play enhancement for real-world T2I systems.


\subsection{Ablation Study}
\paragraph{Ablation on SFT and RL.}
Table~\ref{tab:abl-sft} shows the effect of supervised fine-tuning (SFT) and reinforcement learning (RL) on GenEval. SFT brings modest gains (+0.01 overall), mainly improving positional understanding (0.35→0.43), suggesting it helps inject object-attribute knowledge but struggles with complex reasoning. RL yields larger boosts (+0.04 overall), as it directly optimizes visual correctness. Combining SFT and RL achieves the best results (0.76 overall), with strong improvements in spatial reasoning (0.62) and counting (0.77). These results confirm that SFT offers useful priors, while RL is key for compositional robustness.

\begin{figure}[t]
  \centering
    \includegraphics[width=0.93\linewidth]{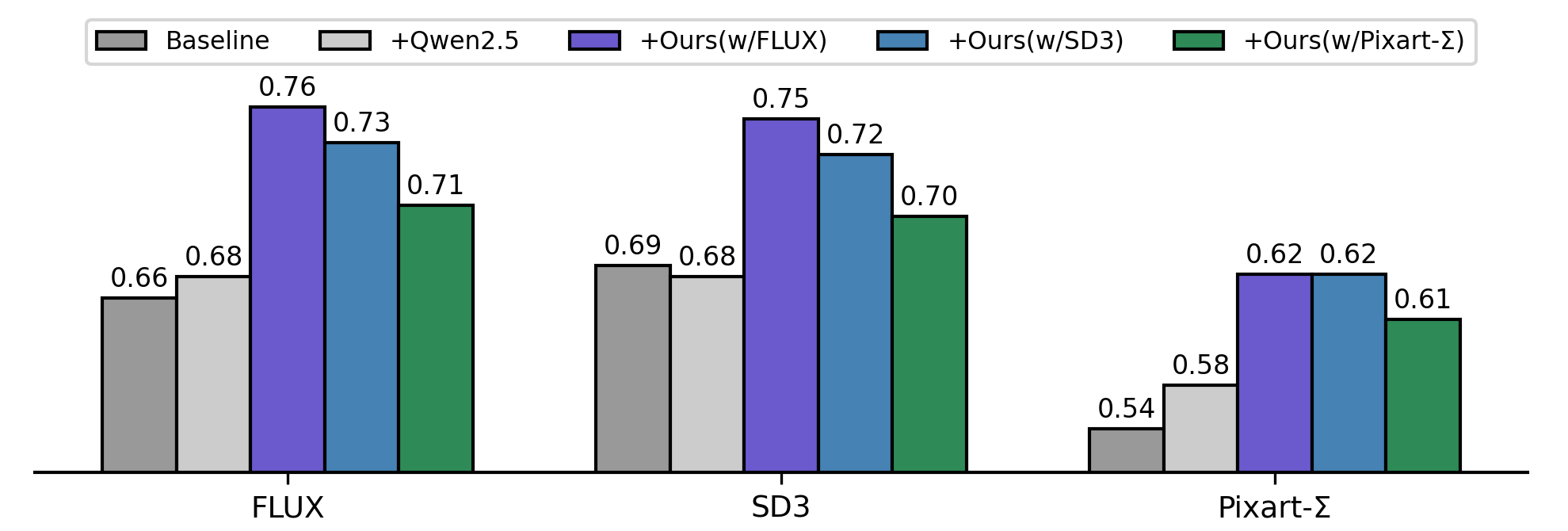}
  \caption{Impact of our method across different base T2I models on the GenEval benchmark. Our method consistently improves the compositional understanding across all base models.}
  \label{fig:abl-t2i}
\end{figure}


\begin{figure}[t]
  \centering
    \includegraphics[width=0.96\linewidth]{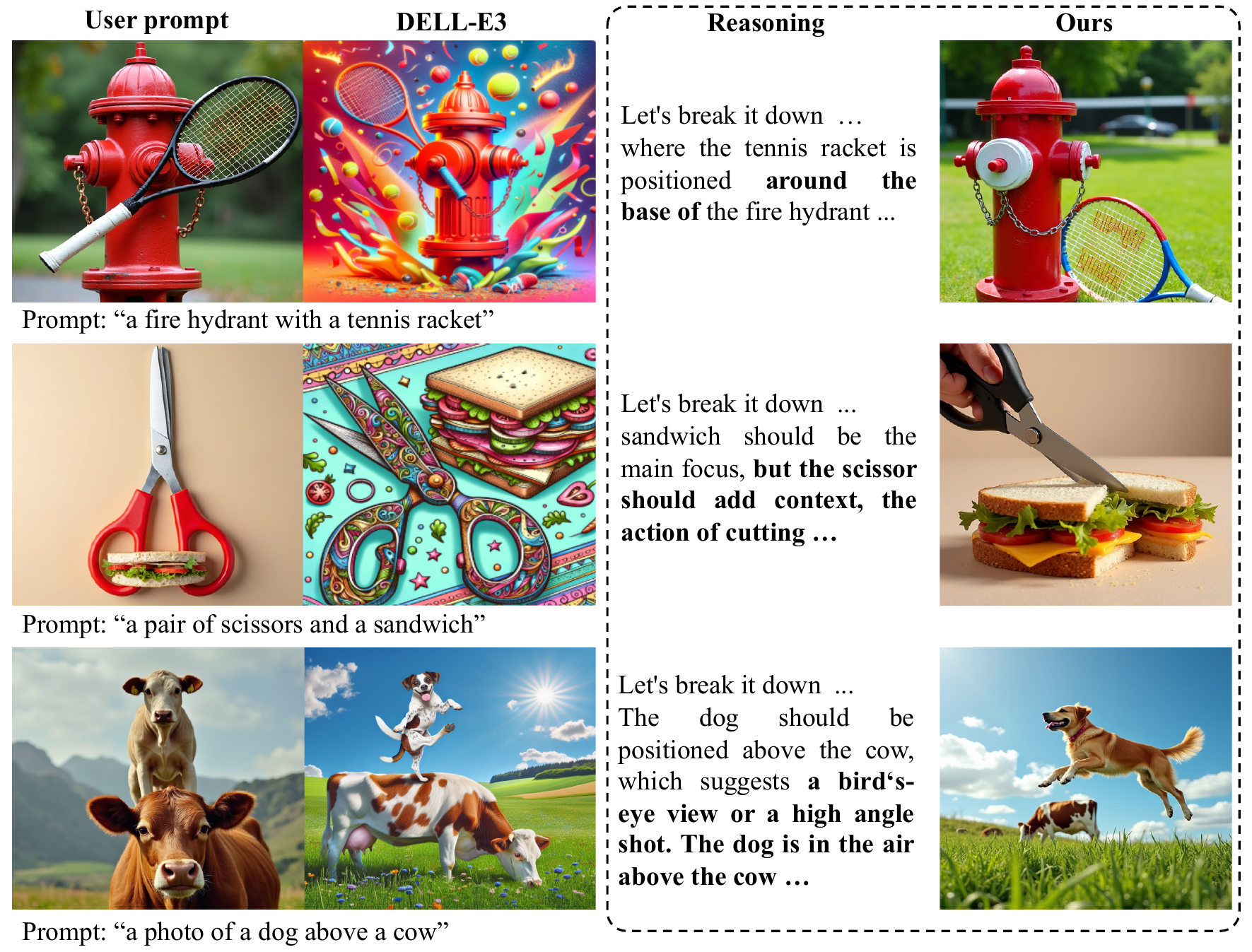}
  \caption{Qualitative results on compositional prompts. Compared to vanilla T2I models, our RePrompt improves spatial layout and object relations by generating enhanced prompts with explicit reasoning, leading to more faithful compositions.}
  \label{fig:vis}
  \vspace{-0.4cm}
\end{figure}

\paragraph{Comparison on Accuracy and Latency.}
Table~\ref{tab:comparison} benchmarks our RePrompt against prior methods in terms of both generation accuracy (GenEval overall) and inference latency. Among base models, FLUX.1 achieves higher accuracy (0.65) compared to Show-o (0.55) but suffers from a longer latency (20s vs. 3s per image), reflecting a trade-off between quality and speed. When incorporating advanced prompting techniques, Idea2Img (w/ FLUX.1) improves accuracy to 0.69 but at the cost of a significant latency increase (140s per image), while PARM++ (w/ Show-o) achieves 0.72 accuracy with 110s latency. 
In contrast, our RePrompt achieves the best accuracy (0.76) while maintaining a much lower latency (30s). This demonstrates the effectiveness of our design in enabling precise visual grounding without compromising efficiency, making it more practical for real-world deployment.

\paragraph{Ablation Study on Trained T2I Models.} 
Figure~\ref{fig:abl-t2i} illustrates the generalization ability of RePrompt across three diverse T2I backbones: FLUX, SD3, and PixArt-$\Sigma$. Our method consistently boosts compositional scores on GenEval—improving FLUX from 0.66 to 0.76, SD3 from 0.69 to 0.75, and PixArt-$\Sigma$ from 0.54 to 0.62—outperforming both baselines and the +Qwen2.5 variants. These results demonstrate RePrompt’s robustness and model-agnostic design, working effectively across both large (e.g., SD3) and lightweight (e.g., PixArt-$\Sigma$) backbones. Notably, we observe that stronger T2I models used during training lead to better generalization at inference, likely due to richer reasoning patterns induced during learning.


\paragraph{Qualitative Comparison.}
Figure~\ref{fig:vis} illustrates the qualitative advantage of RePrompt over existing T2I models. Baseline models often generate images with incorrect spatial relations or hallucinated objects. For example, when prompted with “a fire hydrant with a tennis racket,” DALL-E 3 produces unrealistic, stylistic blends where the objects are merged. In contrast, RePrompt accurately grounds the tennis racket "around the base" of the fire hydrant, respecting the intended composition. Similarly, for “a photo of a dog above a cow,” our method correctly depicts the dog in the air above the cow with "a high angle shot", aligning with the prompt semantics. These results highlight RePrompt’s ability to mitigate typical failures in spatial arrangement and object interaction by generating prompts with explicit compositional cues. More cases are in the Appendix~\ref{app:d}.

\paragraph{More Ablation Study.} 
We present additional ablation studies on reasoning, training dynamics, and reward functions in the Appendix~\ref{app:c} to further validate the effectiveness of our reasoning design and reward selection, as well as the stability of the training process.

\section{Limitations}
\label{sec:limit}

While our method consistently improves compositional quality across standard T2I benchmarks, several limitations warrant future exploration. First, the performance gains on certain fine-grained tasks—such as numeracy and object counting—remain modest, suggesting potential for further enhancement in precise quantitative reasoning. Second, our approach requires fine-tuning with compositional supervision, which may introduce additional computational cost and reliance on structured training signals. However, this design is consistent with common RLHF setups and does not limit practical deployment. Third, the effectiveness of RePrompt depends on the quality of the reward model; while we demonstrate robustness across standard evaluators, improvements in reward fidelity could further amplify performance. 

\section{Conclusion}
In this work, we propose a simple yet effective method to enhance compositional understanding in text-to-image (T2I) generation models. By injecting chain-of-thought (CoT) reasoning into the prompt construction pipeline and pairing each CoT with an enhanced prompt, our approach improves the alignment between textual descriptions and generated images. Extensive experiments on the GenEval benchmark demonstrate that our method consistently improves performance across various T2I backbones, including FLUX, SD3, and Pixart-$\Sigma$, without requiring additional model retraining. These results highlight the generalizability and plug-and-play nature of our method. We believe our approach offers a new perspective for improving controllability and compositional fidelity in generative models.



\bibliographystyle{plain}
\bibliography{neurips_2025}

\newpage
\appendix



\section{Variance Reduction via Structured Reasoning}
\label{app:a}

In this appendix, we provide a full theoretical analysis showing that conditioning prompt generation on an explicit reasoning trace \(H\) strictly reduces the variance of the downstream reward estimator.  This reduction in variance directly leads to improved sample efficiency for reinforcement learning.

\subsection{Setup and Notation}

Let:
\begin{itemize}
  \item \(\mathcal{P}\) be the space of bare prompts \(P'\).
  \item \(\mathcal{H}\) be the space of reasoning traces \(H\).
  \item \(r: \mathcal{P} \to [0,R_{\max}]\) be the reward random variable obtained by sampling \(P'\sim \pi(P)\) and generating an image \(I=f_\phi(P')\).
  \item \(r_H: \mathcal{H}\times\mathcal{P}\to [0,R_{\max}]\) be the reward when first sampling \(H\sim \pi_H(P)\), then \(P'\sim \pi(P\,|\,H)\), and finally \(I=f_\phi(P')\).
  \item All expectations and variances are taken over the joint sampling of \(H\) and \(P'\).
\end{itemize}

\subsection{Law of Total Variance}

By the law of total variance,
$$\mathrm{Var}\bigl[r(P')\bigr]
= E_H\bigl[\mathrm{Var}[r(P')\mid H]\bigr]
  + \mathrm{Var}_H\bigl[E\bigl[r(P')\mid H\bigr]\bigr.]$$
Since variances are nonnegative, we immediately have:

\begin{theorem}[Variance Reduction]
\label{thm:variance_reduction}
$$
\mathrm{Var}\bigl[r(H,P')\bigr]
   = E_H\bigl[\mathrm{Var}\bigl[r\mid H\bigr]\bigr]
   \;\le\;
   \mathrm{Var}\bigl[r(P')\bigr].
$$
\end{theorem}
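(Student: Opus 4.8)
The plan is to obtain the statement as an immediate consequence of the law of total variance displayed just above, combined with the nonnegativity of variance. Because the reward is bounded, $r \in [0, R_{\max}]$, every first and second moment — marginal and conditional on $H$ — is finite, so the decomposition is well defined and no integrability caveats are needed; this is essentially the only regularity fact I would have to check.

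First I would write the law of total variance for $r$, regarded as a random variable over the joint sampling of the trace $H$ and the prompt $P'$:
$$\mathrm{Var}[r] = \underbrace{E_H\bigl[\mathrm{Var}[r \mid H]\bigr]}_{\text{within-trace}} + \underbrace{\mathrm{Var}_H\bigl[E[r \mid H]\bigr]}_{\text{between-trace}}.$$
The first term is the residual reward uncertainty once the reasoning trace is fixed; the second is the spread of the conditional mean reward across traces. Next I would observe that the between-trace term is the variance of the real-valued random variable $E[r \mid H]$ and is therefore nonnegative. Dropping it yields $E_H[\mathrm{Var}[r \mid H]] \le \mathrm{Var}[r]$, and identifying the left-hand side with $\mathrm{Var}[r(H,P')]$ gives both the equality and the inequality in the claim.

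The main obstacle is not the algebra, which is one line, but pinning down what the symbol $\mathrm{Var}[r(H,P')]$ actually denotes. Since the image, and hence the reward, depends on $H$ only through $P'$ (as $I = f_\phi(P')$), a reward sampled jointly as $(H,P')$ has the same marginal law as $r(P')$ whenever the two sampling schemes induce the same marginal over $P'$; read naively, the inequality would collapse to an equality. The real content therefore rests on interpreting $\mathrm{Var}[r(H,P')]$ as the expected within-trace variance $E_H[\mathrm{Var}[r \mid H]]$ — the effective noise that GRPO must average over when normalizing advantages inside a group that shares a common reasoning trace. I would make this identification explicit before invoking the decomposition, and note that equality holds precisely when $E[r \mid H]$ is almost surely constant, i.e., when the trace is uninformative about the mean reward; this delimits exactly when structured reasoning provides no variance benefit and motivates why informative reasoning traces lower the sample complexity of reward estimation.
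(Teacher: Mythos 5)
Your proposal is correct and follows essentially the same route as the paper's proof: the law of total variance plus nonnegativity of $\mathrm{Var}_H\bigl[E[r\mid H]\bigr]$, with $\mathrm{Var}\bigl[r(H,P')\bigr]$ identified with the expected within-trace variance $E_H\bigl[\mathrm{Var}[r\mid H]\bigr]$ --- an identification the paper makes tersely via the phrase ``by definition,'' and which you rightly flag as the real content of the statement, since the naive marginal reading would collapse the inequality to an equality. Your added observation about when equality holds (namely, when $E[r\mid H]$ is almost surely constant) goes slightly beyond the paper but does not change the argument.
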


\begin{proof}
By definition,
$$
\mathrm{Var}\bigl[r(H,P')\bigr]
= E_H[\mathrm{Var}[r\mid H]]
$$
and since
$$
\mathrm{Var}[r(P')]
= E_H[\mathrm{Var}[r\mid H]] + \mathrm{Var}_H[E[r\mid H]],
$$
the nonnegativity of $\mathrm{Var}_H[E[r\mid H]]$ yields the result.
\end{proof}

\subsection{Sample Complexity Improvement}

Variance directly impacts the number of samples required to estimate the expected reward within a given accuracy.  Consider estimating the expected reward \(\mu = E[r]\) by drawing \(N\) independent samples \(\{r_i\}\).  By Chebyshev’s inequality,
$$
\Pr\Bigl(|\hat\mu - \mu|\ge \varepsilon\Bigr)
\;\le\;\frac{\mathrm{Var}[r]}{N\varepsilon^2}.
$$
Thus, to guarantee
$\Pr(|\hat\mu - \mu|\ge \varepsilon)\le \delta$,
we require
$$
N \;\ge\; \frac{\mathrm{Var}[r]}{\varepsilon^2\,\delta}.
$$
Applying Theorem~\ref{thm:variance_reduction}, using reasoning reduces the required sample size:
$$
N_{\text{reasoning}}
\;=\;
\frac{\mathrm{Var}[r(H,P')]}{\varepsilon^2\delta}
\;\le\;
\frac{\mathrm{Var}[r(P')]}{\varepsilon^2\delta}
\;=\;
N_{\text{bare}}.
$$

\subsection{Discussion}

This analysis formalizes the intuition that introducing an intermediate reasoning trace \(H\) conditions the policy on structured, compositional information about the scene, thereby reducing uncertainty (variance) in the reward signal.  Empirically, this translates to fewer rollouts needed during GRPO training and more stable gradient estimates—accelerating convergence without additional data or annotations.

\section{Experiment Setting}
\label{app:b}
Table~\ref{tab:grpo_t2i_params} summarizes the key hyperparameters used in our experiments, including configurations for the GRPO optimization algorithm, the FLUX.1 text-to-image model, and the joint training process. For GRPO, we set the clipping range $\varepsilon$ to 0.2 and the KL penalty coefficient $\beta$ to 0.04 with a group size of 4. The FLUX.1 model operates at a resolution of $512 \times 512$, with 50 diffusion steps and a classifier-free guidance (CFG) scale of 3.5. In joint training, we balance ImageReward and VLM-Reward with equal weights (0.5 each), and constrain prompt lengths between 15 and 77 tokens. Training is conducted using 8 devices with a per-device batch size of 4, a learning rate of 2e-6, gradient accumulation steps of 2, and a total of 3 epochs.
\begin{table}[h]
\centering
\caption{Key Parameters for GRPO and T2I Model.}
\label{tab:grpo_t2i_params}
\begin{tabular}{lcc}
\toprule
\textbf{Config} & \textbf{Symbol} & \textbf{Value} \\ 
\midrule
\multicolumn{3}{c}{\textbf{GRPO Config}} \\ 
\midrule
Clipping range & $\varepsilon$ & 0.2 \\ 
KL penalty coefficient & $\beta$ & 0.04 \\ 
Group size & $G$ & 4 \\ 
\midrule

\multicolumn{3}{c}{\textbf{FLUX.1 Config}} \\ 
\midrule
Image resolution & $H \times W$ & $512 \times 512$ \\ 
Diffusion steps & $T$ & 50 \\ 
CFG scale & $\lambda_{\text{cfg}}$ & 3.5 \\ 
\midrule

\multicolumn{3}{c}{\textbf{Joint Training Parameters}} \\ 
\midrule
ImageReward weight & $\alpha$ & 0.5 \\
VLM-Reward weight & $\gamma$ & 0.5 \\
Min prompt’s length & $L_{\min}$ & 15 \\
Max prompt’s length & $L_{\max}$ & 77 \\
Device number & - & 8 \\
Per device batch size & $B$ & 4 \\ 
Learning rate & $lr$ & 2e-6 \\
Gradient Accumulation & - & 2 \\
Epoch & - & 3 \\
\bottomrule
\end{tabular}
\end{table}

\section{Ablation Study}
\label{app:c}
\paragraph{Ablation Study on Reasoning.}
Table~\ref{tab:abl-reason} presents an ablation study on the GenEval benchmark to assess the impact of reasoning in our reinforcement learning framework. The FLUX baseline, integrating a large language model (+Qwen2.5 3B) brings modest gains across most categories, raising the overall score to 0.68. Applying RL without reasoning achieves a similar overall improvement (0.68), suggesting that reward-driven optimization alone contributes to better alignment. However, incorporating reasoning into the RL loop leads to a more substantial improvement, pushing the overall score to 0.72. Notably, categories that require more complex semantic understanding—such as "Colors" (from 0.83 to 0.87) and "Attribute binding" (from 0.46 to 0.53)—benefit the most. These results demonstrate that step-by-step reasoning helps the model better decompose and interpret textual prompts, thereby enabling more accurate and faithful image generation.
\begin{table}[h]
  \caption{Ablation study of reasoning on the GenEval benchmark.}
  \label{tab:abl-reason}
  \centering
  \small
  \setlength{\tabcolsep}{4pt} 
  \begin{tabular}{l|cccccc|c}
    \toprule
    Method  & Single & Two & Counting & Colors & Position & Attribute & Overall~$\uparrow$ \\
           &            object & object &          &        &          & binding   &  \\
    \midrule
    FLUX          & 0.99 & 0.79 & 0.75 & 0.78 & 0.18 & 0.45  & 0.66 \\
    +Qwen2.5 3B      & 0.99 & 0.84 & 0.63 & 0.81 & 0.35 & 0.48 & 0.68 \\
    \midrule
    RL w/o reasoning & 1.00 & 0.81 & 0.68 & 0.83 & 0.33 & 0.46 & 0.68 \\
    RL w/ reasoning & 0.98 & 0.83 & 0.71 & \textbf{0.87} & 0.41 & \textbf{0.53} & 0.72 \\
    
    \bottomrule
  \end{tabular}
\end{table}

\paragraph{Training Dynamics.}
Figure~\ref{fig:line} presents the reward curve during reinforcement learning of the RePrompt. We observe a stable and monotonically increasing trend in the reward, demonstrating that our reward model provides a reliable and effective supervision signal throughout training. The absence of sharp fluctuations or reward collapse suggests that our RL setup maintains stable policy updates. This aligns with the observed downstream improvements, confirming that reward-guided prompt refinement effectively enhances compositional alignment in generated images.
\begin{figure}[t]
  \centering
    \includegraphics[width=0.99\linewidth]{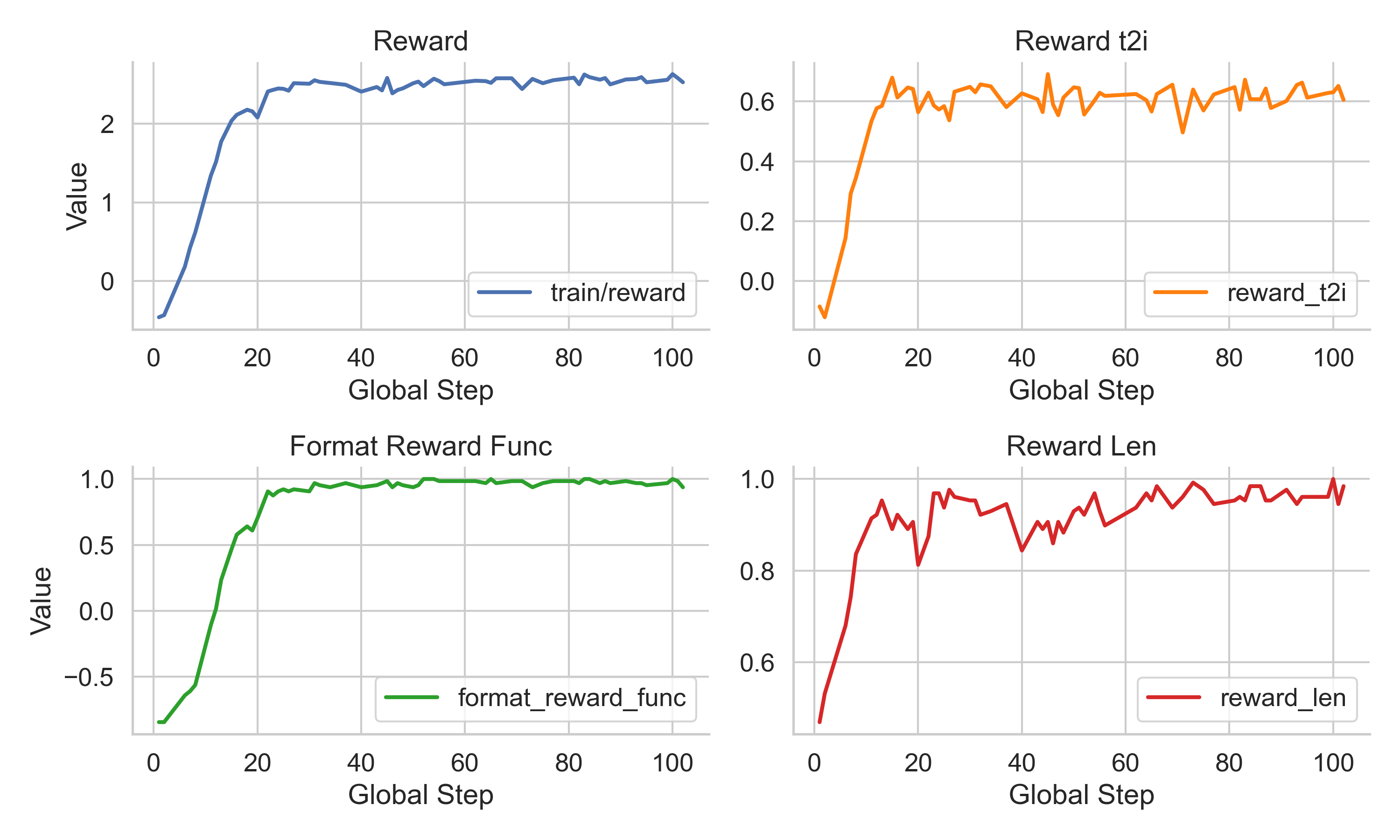}
  \caption{Training curve of our RePrompt during reinforcement learning. The curve shows steady reward improvement, indicating stable training dynamics and effective reward signal.}
  \label{fig:line}
\end{figure}

\paragraph{Ablation Study on Visual Reasoning Rewards.}
To investigate the effectiveness of different visual reasoning reward signals used in our reinforcement learning–based reprompting strategy, we conduct a detailed ablation study on the GenEval benchmark. As shown in Table~\ref{tab:abl-reward}, our method consistently improves over the FLUX baseline and Qwen2.5 3B variant across all subcategories, demonstrating the efficacy of reward-driven learning in aligning generated images with prompt semantics.

Both ImageReward and VLM-Reward show noticeable gains over the +Qwen2.5 3B baseline, particularly in Colors (0.88 vs. 0.81 with ImageReward) and Counting (0.69 vs. 0.63 with VLM-Reward), indicating that each reward captures complementary aspects of visual faithfulness. However, their standalone performances are still limited in challenging tasks like Position and Attribute Binding. Notably, our ensemble reward formulation—which combines both ImageReward and VLM-based feedback—achieves the best overall performance (0.72), with consistent improvements in Counting (+0.08), Position (+0.06), and Attribute Binding (+0.05) compared to the +Qwen2.5 3B baseline. These are precisely the categories that require stronger compositional understanding and spatial reasoning, underscoring the strength of our composite reward design in driving semantically aligned image generation.The ablation confirms that each reward contributes uniquely, and their integration enables more holistic supervision, leading to substantial gains in compositional image-text alignment.

\begin{table}[h]
  \caption{\textbf{Ablation study of rewards on the GenEval benchmark.}}
  \label{tab:abl-reward}
  \centering
  \small
  \setlength{\tabcolsep}{4pt} 
  \begin{tabular}{l|cc|cccccc|c}
    \toprule
    Method  & $\alpha$& $\gamma$& Single & Two & Counting & Colors & Position & Attribute & Overall~$\uparrow$ \\
           & &&           object & object &          &        &          & binding   &  \\
    \midrule
    FLUX       & - & - & 0.99 & 0.79 & 0.75 & 0.78 & 0.18 & 0.45  & 0.66 \\
    +Qwen2.5 3B    & - & - & 0.99 & 0.84 & 0.63 & 0.81 & 0.35 & 0.48 & 0.68 \\
    \midrule
    R1-Prompter & 1 & 0& 0.99 & 0.84 & 0.63 & 0.88 & 0.38 & 0.50 & 0.70 \\
     & 0 & 1 & 0.99 & 0.82 & 0.69 & 0.85 & 0.32 & 0.47 & 0.69 \\
    
     & 0.5 & 0.5 & 0.98 & 0.83 & \textbf{0.71} & 0.87 & \textbf{0.41} & \textbf{0.53} & \textbf{0.72} \\
    
    \bottomrule
  \end{tabular}
\end{table}

\paragraph{Training Dynamics with Different Visual Reasoning Rewards.}
We analyze the impact of different visual reasoning rewards on training dynamics by plotting training curves under various reward configurations. As shown in Figure~\ref{fig:line-abl}, training with a single reward model leads to unstable reward fluctuations and degraded sample quality. This instability arises from the limited supervision capacity of a single reward model, which may overfit to specific patterns or neglect important compositional aspects. In contrast, our proposed multi-reward formulation—balancing diverse reasoning signals, enables smoother optimization and more consistent improvements across iterations. These results emphasize the importance of combining complementary reward models to achieve both stability and generalization.

\begin{figure}[t]
  \centering
    \includegraphics[width=0.99\linewidth]{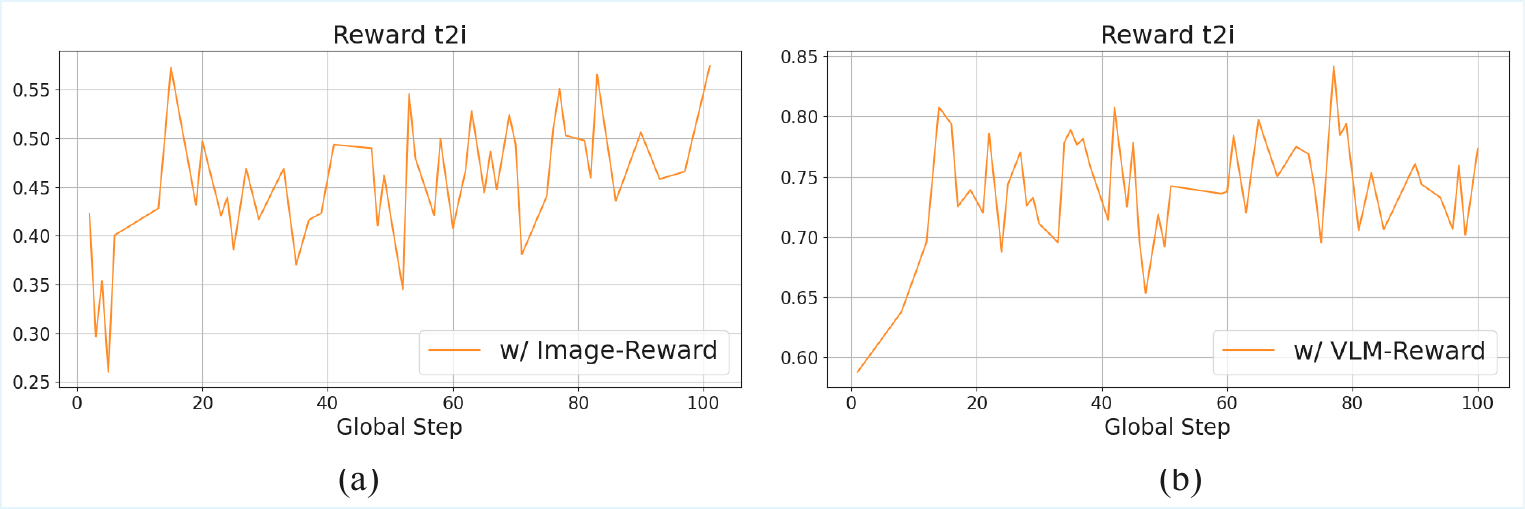}
  \caption{Training curve with different visual reasoning rewards. Using a single reward model leads to instability and suboptimal performance during training. In contrast, our balanced reward design—combining multiple specialized reward signals—yields more stable convergence and higher final reward values.}
  \label{fig:line-abl}
\end{figure}

\begin{figure}[h]
  \centering
    \includegraphics[width=0.96\linewidth]{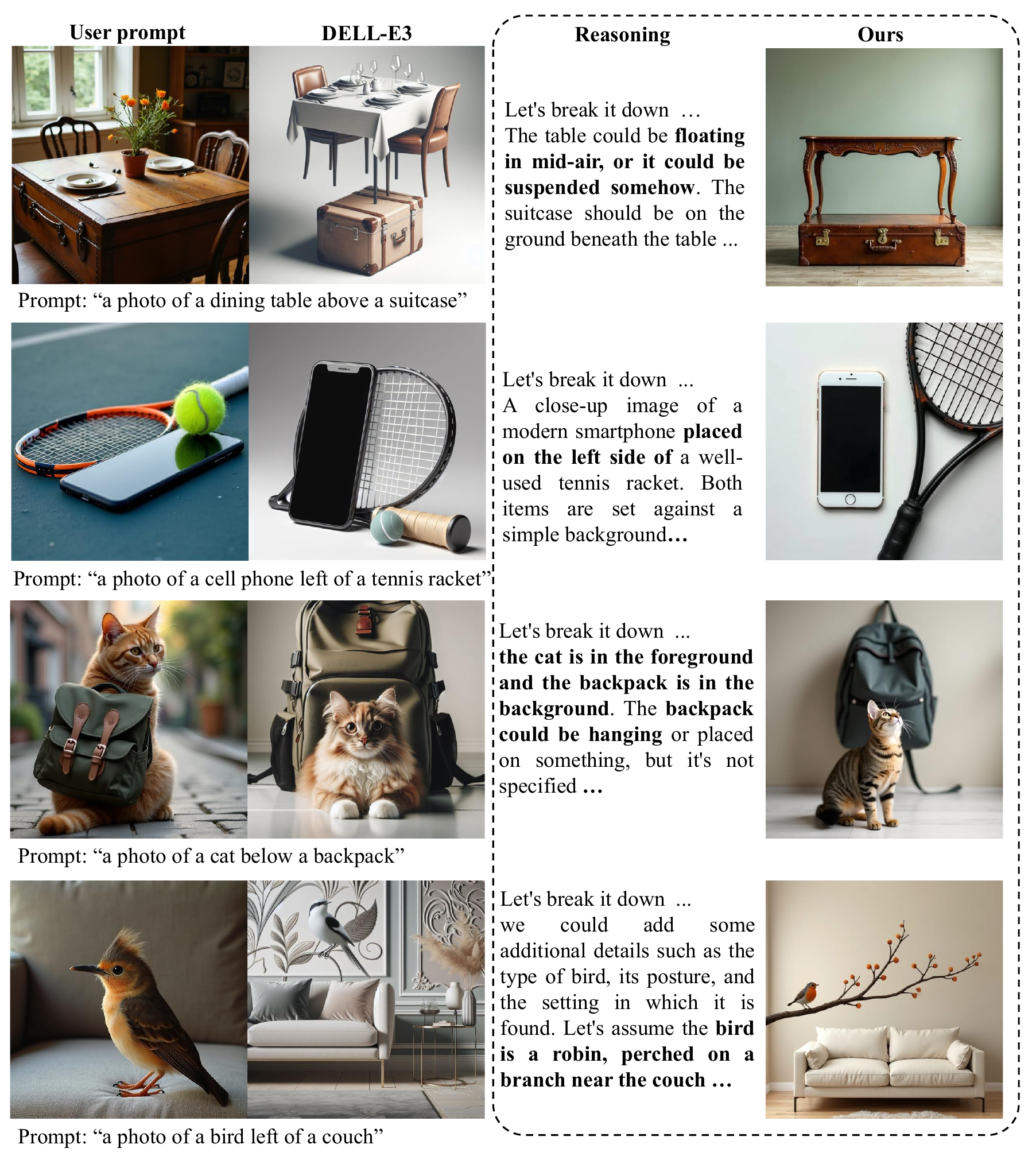}
  \caption{Qualitative results on compositional prompts (Position). We compare DELL-E3, the intermediate reasoning process, and our final RePrompt outputs. While DELL-E3 often struggles with spatial relations (e.g., object positions), our reasoning-guided approach enables more accurate and faithful generations that better match the prompt's compositional constraints.}
  \label{fig:vis1}
\end{figure}

\section{More Cases}
\label{app:d}
\paragraph{More Qualitative Comparison.}
Figure~\ref{fig:vis1},~\ref{fig:vis2},~\ref{fig:vis3} and ~\ref{fig:vis4} present qualitative comparisons on different types of compositional prompts, including Position, Two-object, Color, and Attribute Binding. Compared to baseline methods such as DELL-E3, our approach produces images that more faithfully adhere to the spatial, numerical, and attribute-based constraints specified in the prompts. For instance, in Figure 7, our method accurately grounds relative positions (e.g., "above", "left of") by leveraging explicit reasoning, while DELL-E3 often fails to reflect such relations or misplaces objects entirely. Figure 8 highlights our advantage in handling prompts involving multiple objects, where baseline models tend to merge objects or hallucinate irrelevant content.  Similarly, in Figure 9, our method is better at preserving specified colors for each object, whereas the baseline sometimes misbinds colors or applies them inconsistently. Finally, as shown in Figure 10, our approach improves attribute binding, ensuring that each attribute is applied to the correct object without confusion. These results demonstrate the effectiveness of integrating reasoning to improve alignment between visual outputs and complex prompt semantics.

\begin{figure}[h]
  \centering
    \includegraphics[width=0.99\linewidth]{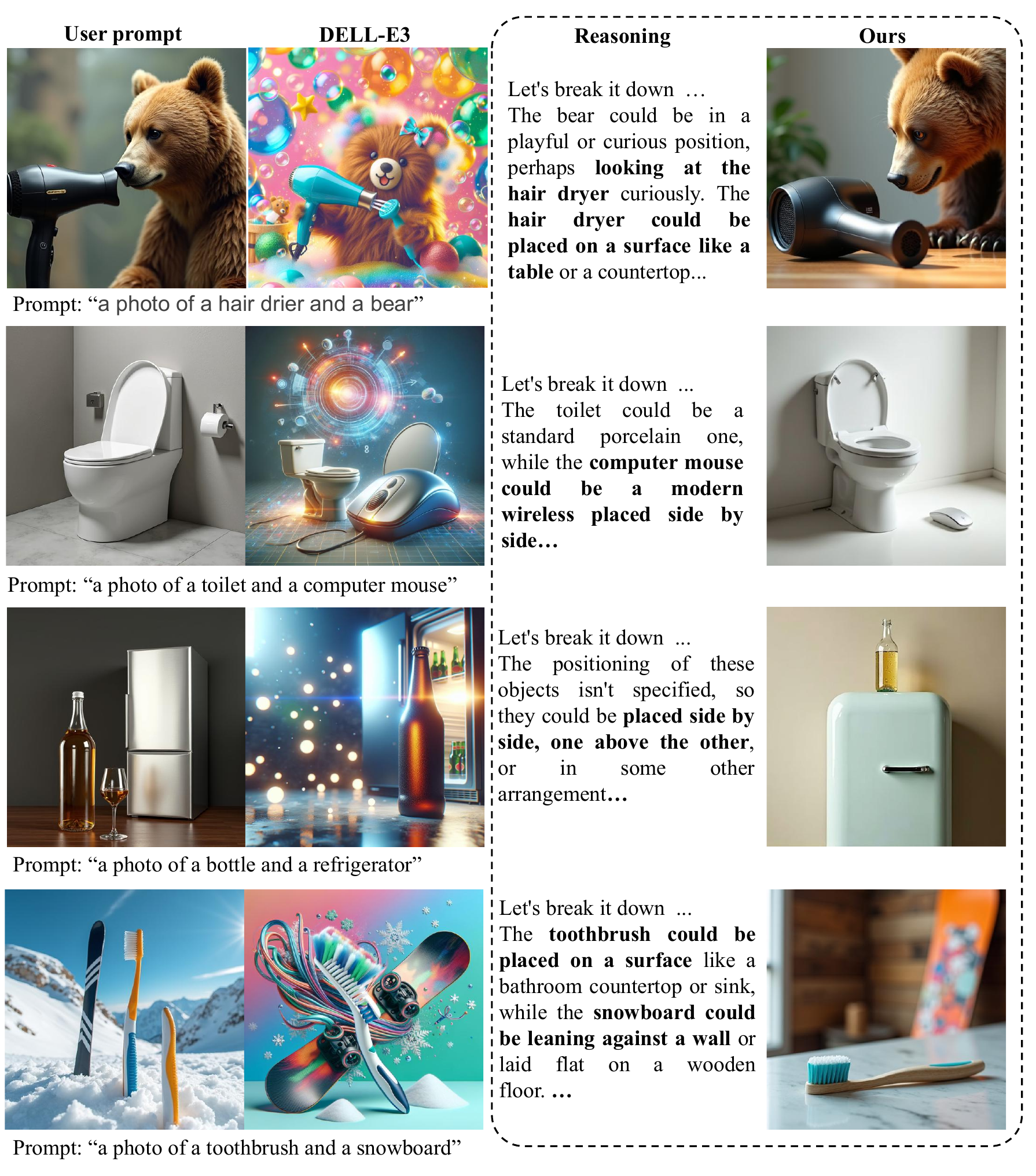}
  \caption{Qualitative results on compositional prompts (Two-object). We show comparisons among DELL-E3, our reasoning process, and RePrompt outputs on prompts involving two objects. DELL-E3 often fails to generate both entities accurately, whereas our method uses explicit reasoning to guide the model in generating semantically correct and compositionally faithful images.}
  \label{fig:vis2}
\end{figure}

\begin{figure}[h]
  \centering
    \includegraphics[width=0.99\linewidth]{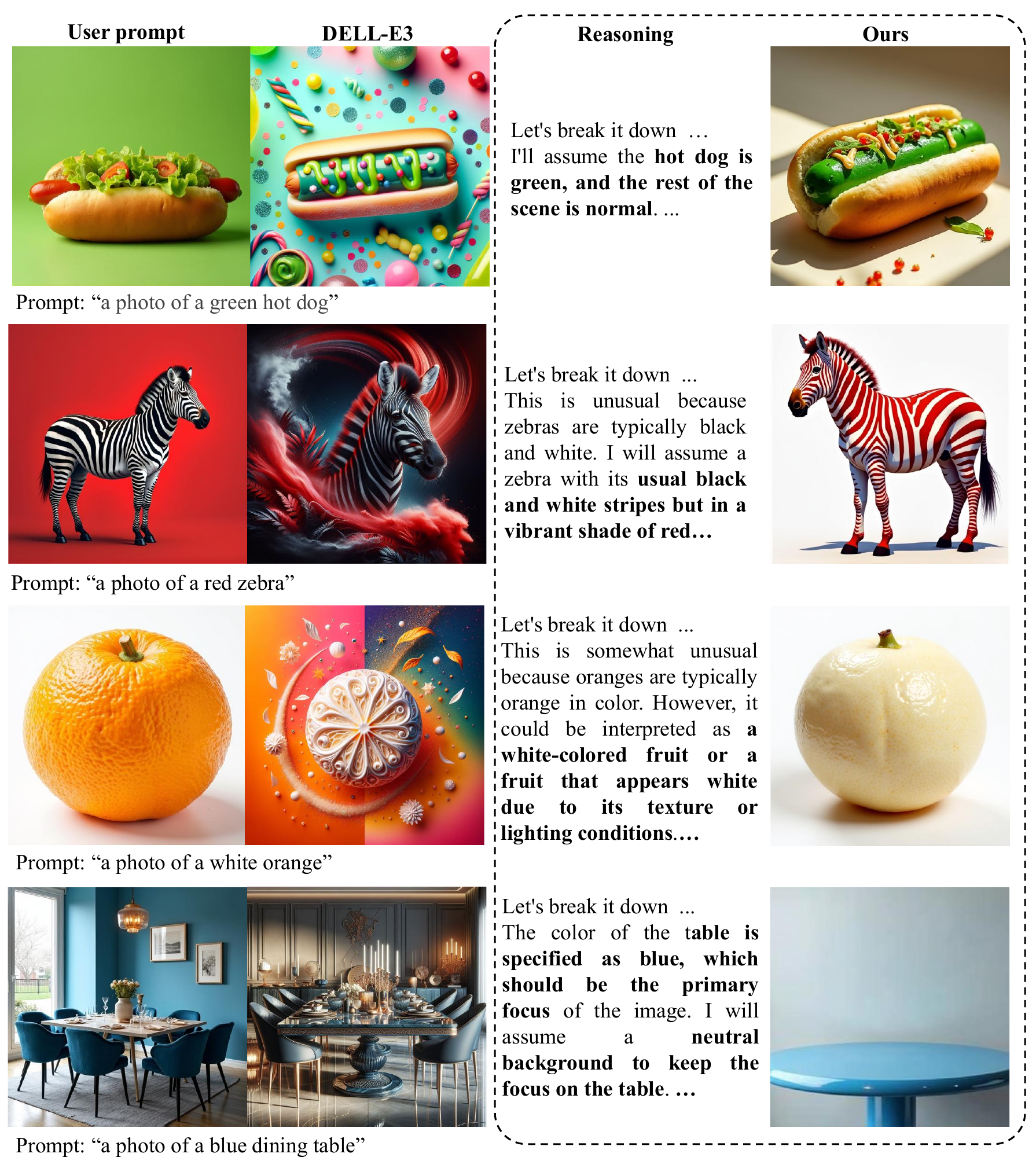}
  \caption{Qualitative results on compositional prompts (Color). We present qualitative comparisons on prompts involving specific color attributes. While DELL-E3 tends to ignore or misinterpret color constraints, our approach leverages explicit reasoning to ensure accurate color grounding for each object, resulting in more faithful visual compositions.}
  \label{fig:vis3}
\end{figure}

\begin{figure}[h]
  \centering
    \includegraphics[width=0.99\linewidth]{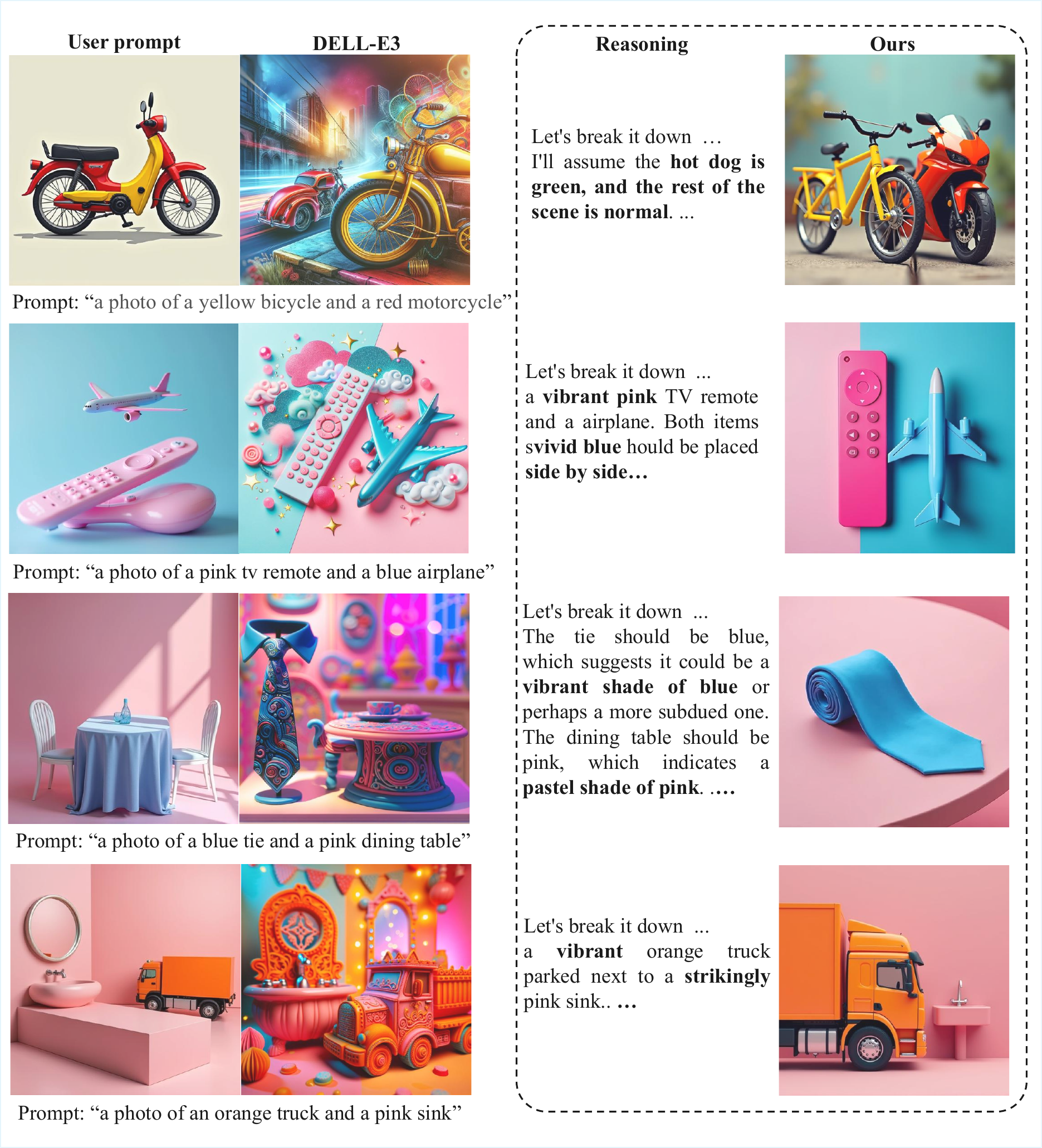}
  \caption{Qualitative results on compositional prompts (Attribute binding). We show examples where prompts specify multiple attributes that must be correctly associated with the corresponding objects. Our method utilizes step-by-step reasoning to disambiguate attribute-object bindings, avoiding attribute swaps or omissions that are common in baseline outputs.}
  \label{fig:vis4}
\end{figure}

\end{document}